\documentclass{article}
\usepackage{graphicx}
\usepackage{hyperref} 
\usepackage{amsmath, amsthm, amssymb}
\usepackage{tikz}
\usepackage{natbib,multirow}
\usepackage{subcaption}
\usetikzlibrary{shapes.geometric}
\usetikzlibrary{angles,quotes}
\usetikzlibrary{calc} 
\usetikzlibrary{arrows.meta, positioning}
\usepackage{cleveref}

\usepackage{tcolorbox}
\tcbuselibrary{skins}
\usepackage{dashrule} 
\usepackage{xcolor}
\usepackage{wrapfig}
\usepackage{fullpage}

\usepackage{booktabs}
\usepackage{nicematrix}

\usepackage{array}
\usepackage{tabularx}
\usepackage{threeparttable}
\usepackage{listings}
\usepackage{float}

\lstdefinestyle{promptstyle}{
    basicstyle=\ttfamily\small,
    frame=single,
    framerule=0.4pt,
    rulecolor=\color{black!35},
    backgroundcolor=\color{black!3},
    breaklines=true,
    columns=fullflexible,
    keepspaces=true,
    xleftmargin=1em,
    xrightmargin=1em
}

\newtheorem{theorem}{Theorem}

\newtheorem{proposition}[theorem]{Proposition}

\newtheorem{definition}[theorem]{Definition}
\newtheorem{remark}[theorem]{Remark}

\newcommand{\righthand}{{RHI}}
\newcommand{\lefthand}{LHI}

\newcommand{\score}[1]{\(\sfrac{#1}{5}\)}

\usepackage{pifont}
\usepackage{tikz}
\usepackage{xcolor}

\definecolor{tabblue}{RGB}{31,119,180}

\definecolor{tab:green}{RGB}{44,160,44}
\definecolor{tab:red}{RGB}{214,39,40}

\newcommand{\succesmark}{%
  \textcolor{tab:green}{\ding{51}}%
}

\newcommand{\crossmark}{%
  \textcolor{tab:red}{\ding{55}}%
}

\renewcommand{\score}[1]{%
  \begin{tikzpicture}[baseline=-0.75ex]
    \def\scoreradius{0.95ex}%

    \foreach \i in {1,...,5}{%
      \pgfmathsetmacro{\startangle}{90-(\i-1)*72}%
      \pgfmathsetmacro{\endangle}{90-\i*72}%

      \ifnum\i>#1
        \def\sectorcolor{gray!12}%
      \else
        \def\sectorcolor{tabblue}%
      \fi

      \path[
        fill=\sectorcolor,
        draw=white,
        line width=0.25pt
      ]
      (0,0)
      -- (\startangle:\scoreradius)
      arc[
        start angle=\startangle,
        end angle=\endangle,
        radius=\scoreradius
      ]
      -- cycle;
    }%

    \draw[
      black!70,
      line width=0.3pt
    ] (0,0) circle[radius=\scoreradius];
  \end{tikzpicture}%
}

\usepackage{newfloat}
\usepackage{listings}
\DeclareCaptionStyle{ruled}{labelfont=normalfont,labelsep=colon,strut=off} 
\floatstyle{ruled}
\newfloat{listing}{tb}{lst}{}
\floatname{listing}{Listing}

\usepackage{booktabs}

\title{Indexing: the Beginning and the End}

\author{Alexander Kozachinskiy\\CENIA \\\texttt{alexander.kozachinskyi@cenia.cl} \and Vicente Opazo \\ CENIA \\\texttt{vicente.opazo@cenia.cl} \and Felipe Urrutia\\ Pontifical Catholic University of Chile
\\ \texttt{felipe.urrutia@uc.cl}}

\begin{document}
\maketitle

\begin{abstract}

We study information bottlenecks in modern deep-learning architectures -- RNNs, softmax transformers, linear-attention transformers and state-space models -- through the lens of the \emph{indexing primitive}. In this primitive, the input consists of $n$ bits and one integer $i$ from $1$ to $n$ called the index, and the output equals the value of the $i$-th bit.

We introduce causal complexity for masked architectures. We show that architectures with low causal complexity cannot solve the indexing primitive in any constant number of layers when the index appears at the end of the input. In particular, this limitation applies to low-parameter RNNs, SSMs and masked linear-attention transformers. In contrast, small softmax transformers can solve it in one layer, while non-masked linear-attention transformers can solve it in 2, which separates them from their masked counterparts. In turn, when the index appears at the beginning, we show that small RNNs are capable of solving this task in 1 layer, while all the other architectures require 2. 

All our impossibility results are unconditional and apply even to models that employ infinite-precision real arithmetic. Moreover, experiments for up to $n=64$ qualitatively align with our theory: configurations with low-parameter theoretical solutions learn the indexing task easily, while configurations that do not admit such theoretical solutions struggle to learn as the sequence length grows.


\end{abstract}

\section{Introduction}

Part of the research on the expressivity of transformers~\citep{strobl2024formal} has been revolving around their ability to compute certain fundamental \emph{primitives}. These primitives are supposed to be a mathematical abstraction of some challenges that arise in processing information by transformers. The most well-studied primitive is the \emph{function composition}.

It has been linked to the transformers' ability to infer implicit facts through \emph{compositional generalization}~\citep{peng2024limitations,guan2024mitigating}. For instance, the training data may contain the birth year of a singer $A$ and the Nobel Prize winners for each year, but not explicitly answer the question ``who won the Nobel Prize in Physics in the year $A$ was born?'' Answering it requires composing a function mapping people to birth years with one mapping years to Nobel Prize winners.

Mathematically, this inference is abstracted as the following task: given two functions $f,g\colon[n]\to[n]$ as lists of values, and an index $i\in[n]$, compute $g(f(i))$, where $[n]=\{1,\ldots,n\}$. \citet{peng2024limitations} introduced this task and used communication complexity to show that 1-layer transformers with $n^{o(1)}$ parameters and $n^{o(1)}$ bits of precision cannot solve it. They further showed that such transformers require $\Omega(k)$ chain-of-thought iterations for $k$-fold composition,
\[i\mapsto f_k\circ f_{k - 1} \circ \ldots \circ f_1(i).\]
Afterwards, \citet{kozachinskiy2026strassen} removed the bounded-precision assumption, proving the same impossibility for infinite-precision 1-layer transformers with $n^{o(1)}$ parameters. They have also introduced the \emph{binary relation composition} primitive and have conjectured that it cannot be done by $O(1)$-layer small transformers.
At the same time, \citet{barceloehrenfeucht} showed that $k$-fold composition requires exactly $k$ chain-of-thought steps for 1-layer transformers of any size, although only under unique hard attention.

In their breakthrough work, \citet{chen2025theoretical} were able to extend this analysis to multi-layer transformers. They have shown that $n^{o(1)}$-parameter $n^{o(1)}$-precision transformers \emph{with masking} require $k$ layers to solve a variation of the $k$-fold function composition task, but full-attention transformers can solve it in $O(\log k)$ layers. Their technique is based on ``autoregressive'' communication complexity and thus requires assumptions about bits of precision. It is open to obtain lower bounds on multi-layer infinite-precision transformers, even with masking.

Another primitive that arose from mechanistic interpretability is the \emph{induction heads} task. Given a sequence of symbols, the goal is to predict what will go after the last symbol. One simple strategy is -- take the closest previous appearance of the last symbol in the sequence and look what goes after it. It was observed empirically that this mechanism is widely used by transformer language models~\citep{elhage2021mathematical,olsson2022context}. In turn, \citet{sanford2024one} have shown that small 1-layer transformers cannot do this primitive while small 2-layer can (their notion of ``small'' includes the assumptions about the number of bits of precision).

Recently, \citet{strobl2025concise} have turned their attention  to an even more basic primitive of \emph{function evaluation} -- given a function $f\colon\{1,2,\ldots, n\}\to\{1,2,\ldots, n\}$ and an input $i$, output $f(i)$. They have found out that even this simple task might be hard for small 1-layer transformers under some assumptions about input representation. Their result applies even to infinite-precision transformers.

As we see, there is a rich literature on the analysis of these primitives for transformers. However, more modern architectures, aimed to circumvent the computational bottlenecks of transformers -- like Mamba~\citep{gu2023mamba} or linear-attention transformers~\citep{katharopoulos2020transformers} -- still wait their turn. In this paper, we address this gap by studying a version of function evaluation primitive called indexing -- which is essentially the function evaluation for functions that take just two values, 0 and 1. Our analysis includes standard transformers, linear-attention transformers, state-space models (SSMs), and ``pre-transformer'' RNNs.
Albeit very simple, the indexing primitive surprisingly  reveals a rich interplay between these architectures, and provides  insights on how they process information in a form of rigorous mathematical results. Moreover, it allows us to obtain  the first multi-layer infinite-precision lower bounds -- although not for transformers, but for a class of architectures with low \emph{causal complexity} -- including SSMs, RNNs, and masked linear-attention transformers. We now describe our results in more detail.
 
\paragraph{Our results}
In the indexing task, we are given $n$ bits $\sigma_1, \ldots, \sigma_n\in\{0, 1\}$ and a number $i\in\{1,2,\ldots, n\}$ called the index. The goal is to output $\sigma_i$. Observe that it can be seen as a function evaluation task for a function of the form $\sigma\colon\{1,2,\ldots, n\}\to \{0,1\}$.

We consider two versions of this task.  In the   left-hand indexing, the token with $i$ is the first one while the answer has to be computed in the special end-token. In the right-hand indexing, the token with $i$ is the last one, and the answer has to be computed in it. In both versions, there is also a token for each of the bits $\sigma_1, \ldots, \sigma_n$.

Our contribution is summarized as follows.

\begin{itemize}
    \item For causal architectures, or architectures with masking -- those where the output in $k$-th position does not depend on input in the positions after $k$ -- we introduce the notion of \emph{causal complexity} of a layer. Intuitively, it measures the minimal dimension to which the first $k - 1$ inputs to a layer can be compressed so that then the $k$-th output can be ``easily'' computed from this compressed representation  and the $k$-th input. 
    
    We show that no $O(1)$ layers with $n^{o(1)}$ causal complexity can solve the right-hand indexing task. We then observe that ``small'' ($n^{o(1)}$-parameter) masked linear-attention, Mamba and RNN layers all have $n^{o(1)}$ causal complexity, deriving the corresponding limitations for these architectures. Once again, our lower bound technique applies even in the infinite-precision regime. Thus, we conclude that no $O(1)$ small layers of these architectures, even if they use infinite-precision computations, can solve the right-hand indexing task.

    \item In contrast, as it essentially was observed in previous works~\citep{strobl2025concise} standard 1-layer transformers can solve the right-hand indexing task with $O(1)$ parameters. This is explained by their high causal complexity compared to other architectures. We also show that 2-layer non-masked linear-attention transformers with $O(1)$ parameters can solve the right-hand indexing task as well.  This separates them from their masked version which cannot solve the task in any constant number of $n^{o(1)}$-parameter layers.

    \item Finally, we observe that  in the left-hand indexing task, RNNs prevail over all other architectures. While RNNs can solve it in 1 layer with $O(1)$ parameters, neither standard attention, nor linear-attention, nor Mamba can solve it in 1 layer with $n^{o(1)}$ parameters (but all of them can solve it in 2 layers with $O(1)$ parameters, with or without masking).

\end{itemize}

Our results exploit information bottlenecks in the aforementioned architectures. In the right-hand indexing task, the bits are processed before the index. Thus, at the moment when index is considered, the information we receive from the previous positions must essentially allow us to recover the whole $n$-bit string. But such memory capacity turns out to be bounded by the causal complexity, which is low for every masked architecture in question except transformers. 

In turn, in the left-hand indexing task, the indexed $i$ is processed first, and intuitively it should be then carried over the  input until we reach the $i$-th position. Then the bit at this position should be remembered and carried over to the end. RNNs are perfectly suitable for this task, while, for instance, SSMs, which are sometimes viewed as parallelizable alternatives to RNNs because of their linear update rule, struggle on this task -- precisely, as we will show, because of that linearity.

The following table summarizes our results, indicating for each architecture the minimal number of ``small'' layers needed to solve the left-hand and the right-hand indexing tasks.

\begin{table}[!h]
\centering
\begin{tabular}{rcc}
\toprule
Architecture & LHI & RHI \\
\midrule
Full softmax  & 2 & 1 \\
Causal softmax & 2 & 1 \\
Full linear-attention & 2 & 2 \\
Causal linear-attention & 2 & $\omega(1)$  \\
SSM & 2 & $\omega(1)$  \\
RNN & 1 & $\omega(1)$  \\
\bottomrule
\end{tabular}
\caption{Minimum number of $n^{o(1)}$-parameter layers required for the left-hand
indexing (LHI) and the right-hand indexing (RHI). Recall that the $\omega(1)$ notation means ``is not bounded by a constant''.}
\label{tab:main-results}
\end{table}

All our impossibility results hold even for models that use infinite-precision computations with real numbers. We achieve this via adapting the VC dimension technique of \citet{kozachinskiy2026strassen} beyond the transformers context.

Finally, we experimentally test whether existence or absence of a low-parameter theoretical solution correlates with the  ability to learn the indexing task in practice. First, we find that already at length $n = 64$ the configurations on the lower-bound side consistently fail under our fixed training protocol. Across all seeds, causal linear-attention transformers, RNNs and Mamba (all with up to 4 layers) consistently fail to learn the right-hand indexing, as well as 1-layer transformers (softmax and linear) and 1-layer Mamba -- the left-hand indexing. As for the ``existence results'' (upper bounds), the situation seems to depend on the positional encoding. We found that polynomial positional encoding leads to the strongest agreement with our theory: among the 160 non-redundant constructive runs represented in our summary, only two failed to learn the task perfectly.

\subsection{Further related work} Study of the expressive power of transformers is not limited to the study of the above primitives. For instance, part of the research was dedicated to showing that these machines, equipped with the mechanism of autoregressively generating tokens, are capable of simulating any computation~\citep{perez2021attention,merrill2024expressive,jiang2025softmax}. A significant part of the literature is devoted to obtaining limitations for computing formal languages, using methods from computational complexity and logic~\citep{hahn2020theoretical,chiang2023tighter,yang2024masked,salzer2025counting}.

As for RNNs, long before the deep learning era it was observed that they are capable  of simulating finite automata~\citep{minsky1967computation}. Moreover, when one allows arbitrary rational weights and an unbounded number of intermediate steps after processing the input, they are capable of simulating Turing machines~\citep{siegelmann1994analog}. Very recently, \citet{merrill2026olmo} gave an example of a task, doable by a hybrid architecture (RNNs + transformers) but not by RNNs or transformers separately, under standard complexity assumptions.

In turn, SSMs have been investigated from the viewpoint of their ability to overcome limitations of transformers in recurrent computations~\citep{merrill2024illusion} -- as a more scalable alternative to RNNs in this matter. It was observed that this hugely depends on design choices. For instance, Mamba is not capable of performing summation over the group $\mathbb{Z}_2$~\citep{sarrof2024expressive}, while more general diagonalizable SSMs, under finite-precision assumption, are capable of performing summation over a finite group if and only if this group is solvable~\citep{shakerinava2026expressive}. 

\section{Preliminaries}
\subsection{VC dimension and arithmetic complexity}

\begin{definition}
\label{def_ar}
    Let $f\colon\mathbb{R}^n \to \mathbb{R}^m$. Its arithmetic complexity, denoted by $AC(f)$, is the minimal  $t\in\mathbb{N}$ such that $f$ can be computed by some algorithm in no more than $t$ of the following operations:
    \begin{itemize}
        \item the exponential function $\alpha\mapsto e^\alpha$ on real numbers;
        \item the arithmetic operations $+,-,\times,/$ on real numbers;
        \item jumps conditioned on $>, =$ comparisons of real numbers; 
        \item output a constant or a previously computed value.
    \end{itemize}
    (If no such $t$ exists, we set $AC(f) = +\infty$.)
    \end{definition}

By a concept class we mean a function $C\colon\mathbb{R}^n\times \mathbb{R}^p\to \{0, 1\}$. The first $n$ coordinates form an \emph{input}, and the last $p$ coordinates form \emph{parameters}.
\begin{definition}
    Let $C\colon\mathbb{R}^n\times\mathbb{R}^p \to \{0, 1\}$ be a concept class. Its VC dimension, denoted by $VCdim(C)$, is the maximal natural number $d$ such that for some $x^1, \ldots, x^d\in\mathbb{R}^n$, the following holds. For every $d$-bit string $\alpha = \alpha_1\ldots \alpha_d\in\{0,1\}^d$ there exists $y\in\mathbb{R}^p$ such that:
    \[C(x^1, y) = \alpha_1, \ldots, C(x^d, y) = \alpha_d.\]
\end{definition}

\begin{theorem}[\citep{anthony2009neural} Theorem 8.14]
\label{thm_vc}
Let $C$ be a concept class with $p$ parameters. Then
\[VCdim(C) = O(AC(C)^2 \cdot p^2). \]
\end{theorem}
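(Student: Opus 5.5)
The plan is to follow the standard route of Karpinski and Macintyre, as presented by \citet{anthony2009neural}. First bound the number of dichotomies that $C$ induces on any $d$ fixed inputs by counting the sign patterns of a family of Pfaffian functions of the parameters. Then compare this count with $2^d$. Throughout, let $t=AC(C)$.

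\textbf{Step 1: unfold the algorithm.} Fix inputs $x^1,\ldots,x^d\in\mathbb{R}^n$ and view the algorithm on input $x^j$ as a function of the parameters $y\in\mathbb{R}^p$ only. Since it performs at most $t$ operations, we can unfold it into a computation tree of depth at most $t$. Each branching node tests the sign ($>0$, $=0$, or $<0$) of an intermediate value, and each intermediate value is built from $y$ by at most $t$ arithmetic and exponential operations. I will remove divisions by introducing auxiliary quantities $u$ satisfying $u\cdot v=1$. After this, every intermediate value is a polynomial of degree at most $2^{O(t)}$ in $y$ and in at most $t$ exponential terms $e^{g(y)}$, where each $g$ is itself such a value. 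These exponential terms form a Pfaffian chain of length at most $t$ for each input point. The label $C(x^j,y)$ is then determined by the signs of at most $O(t\cdot 2^t)$ such functions; more economically, it is determined along the single realised path by at most $t$ of them.

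\textbf{Step 2: reduce to counting sign patterns.} Collecting these functions over all $j$ gives a family $F$ of $m\le d\cdot 2^{O(t)}$ Pfaffian functions in $p$ variables. The number of dichotomies of $\{x^1,\ldots,x^d\}$ realised by varying $y$ is at most the number of distinct sign vectors of $F$ on $\mathbb{R}^p$. After a small perturbation of thresholds (the standard general-position argument), this is bounded by the number of connected components of sets of the form $\{f_{1}=\cdots=f_{k}=0\}$ with $k\le p$, summed over the $\binom{m}{\le p}$ choices of such subfamilies.

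\textbf{Step 3: apply Khovanskii's bound (the main obstacle).} The crucial point is to avoid the naive chain length. Taking the union of the chains over all $d$ points gives a chain of length $dt$, and Khovanskii's bound $2^{O(q^2)}(\ldots)^{O(p+q)}$ with $q=dt$ would yield nothing useful. Instead I will note that each subfamily in Step 2 involves at most $p$ functions, and hence at most $p$ input points. Its zero set therefore lives in a Pfaffian chain of length $q\le pt$, with degrees at most $2^{O(t)}$. Khovanskii's theorem then bounds the number of connected components by $2^{O(p^2t^2)}\cdot 2^{O(t(p+pt))}=2^{O(p^2t^2)}$. Multiplying by $\binom{m}{\le p}\le (d\,2^{O(t)})^{O(p)}$, we get
\[
2^d\;\le\;(d\,2^{O(t)})^{O(p)}\cdot 2^{O(p^2t^2)}
\]
whenever $x^1,\ldots,x^d$ are shattered.

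\textbf{Step 4: conclude.} Taking logarithms gives $d\le O(p\log d)+O(p^2t^2)$. Since $p\log d\le d/2+O(p\log p)$, this yields $d=O(p^2t^2)$, as claimed in Theorem~\ref{thm_vc}. The step needing the most care is Step 3. There one must verify that the general-position reduction genuinely lets each connected-component count use only the chains of $p$ input points. One must also check that the degree bounds remain $2^{O(t)}$ after eliminating divisions and comparisons, since these degrees enter only through the lower-order factor.
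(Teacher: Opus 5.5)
Your outline is correct. It is essentially the Karpinski--Macintyre argument behind the cited Theorem~8.14 of Anthony and Bartlett: count sign patterns, use general position to pass to subfamilies of at most $p$ functions (hence at most $p$ input points and a Pfaffian chain of length $O(pt)$), then apply Khovanskii's bound. The paper itself gives no proof and just imports the result. The one point to tighten is division: put each reciprocal $u=1/v$ into the Pfaffian chain (it satisfies $\partial_i u=-u^2\,\partial_i v$), or introduce $u$ only locally for the at most $p$ input points of a given subfamily. If you instead add these as global new coordinates, the ambient dimension, and with it the allowed subfamily size, would grow with $d$.
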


\subsection{Layers and tasks}
We work with a notion of a \emph{layer}. In general, a layer with input length $n$ and input-output dimension $d$ (in short, an $(n,d)$-layer) is any function $f\colon(\mathbb{R}^d)^n \to (\mathbb{R}^d)^n$.

By \emph{tasks} we formally mean functions $T\colon\Sigma_1 \times \ldots \times \Sigma_n \to \Gamma$, where $\Sigma_1, \ldots, \Sigma_n,\Gamma$ are finite sets. Here $n$ is the input length of a task. We imagine that the input to a task is given in a sequence of $n$ \emph{tokens}: the first token contains an element $\sigma_1\in \Sigma_1$, the second token contains an element $\sigma_2\in \Sigma_2$, and so on. The goal is to compute $T(\sigma_1\ldots \sigma_n)$.

We now formalize what it means that $L$ layers $f_1, \ldots, f_L$ solve a task $T$.

\begin{definition}
  Let $T \colon\Sigma_1\times \ldots \times \Sigma_n \to \Gamma$ be a task, and $f_1, \ldots, f_L$ be $(n,d)$-layers.

  We say that $f_1, \ldots, f_L$ solve $T$ if there exists a function
  \[p\colon\Sigma_1\times\{1\}\cup\ldots\cup \Sigma_n\times\{n\} \to \mathbb{R}^d,\]
(usually referred to as input-position embedding), and a matrix $A\in\mathbb{R}^{\Gamma \times d}$ (usually referred to as output-distribution matrix) such that for any $\sigma_1\ldots\sigma_n\in \Sigma_1\times \ldots \times\Sigma_n$, the following holds. Calculate
\[x_1 = p(\sigma_1, 1), \ldots, x_n = p(\sigma_n, n),\]
\[(y_1, \ldots, y_n) = f_L\circ \ldots \circ f_1((x_1, \ldots, x_n)),\]
\[\alpha = A y_n\in\mathbb{R}^\Gamma.\]
Then, denoting $\gamma = T(\sigma_1\ldots\sigma_n)$, we have $\alpha_\gamma > \alpha_{\gamma'}$ for all $\gamma'\in\Gamma\setminus\{\gamma\}$.
\end{definition}

\paragraph{Indexing task.} The indexing task is the following task: given $n$ bits $\sigma_1, \ldots, \sigma_n\in\{0, 1\}$ and a number $i\in\{1, 2,\ldots, n\}$, the goal is to output $\sigma_i$. We consider two variations -- the \emph{right-hand indexing task} where the input is given via $n +1$ tokens; the last one (where the output should be computed) has $i$.  The left-hand indexing task assumes $n + 2$ tokens, first goes the token with $i$, then tokens with $\sigma_1, \ldots, \sigma_n$, and then a special end token.

\subsection{Attention, RNNs, SSMs} We define several types of layers  from relevant architectures of interest. For each of them, we define the notion of \emph{size}, referring to the number of learnable parameters in their MLPs (that, unless stated otherwise, are assumed to use the ReLU activation). We also take into account arithmetic complexity of some non-polynomial components of the architectures. The size  always includes a term $d$ -- the dimension of the layer -- and thus upper bounds the dimension.

\begin{definition}
    A \textbf{causal softmax transformer} $(n,d)$-layer is given by 3 matrices $K, Q, V \in\mathbb{R}^{d\times d}$ and an MLP $\mathcal{N}\colon\mathbb{R}^d \to\mathbb{R}^d$.

    On input $(x_1, \ldots, x_n)\in(\mathbb{R}^d)^n$, the following computations are performed to produce the output sequence $(y_1, \ldots, y_n)\in(\mathbb{R}^d)^n$. First,
   \[v_\ell = V x_\ell, \qquad q_\ell = Q x_\ell, \qquad k_\ell = K x_\ell\]
   for $\ell = 1, \ldots, n$. Then
    \begin{equation}
    \label{eq_attention}
   a_k = \frac{\sum\limits_{\ell = 1}^k v_\ell  \exp\{k_\ell^T q_k\} }{\sum\limits_{\ell = 1}^k \exp\{k_\ell^T q_k\}},\qquad k = 1, \ldots, n,     
    \end{equation}
    and then
    \[y_k = \mathcal{N}(x_k + a_k), \qquad k = 1, \ldots, n\]
    In a \textbf{full softmax transformer} layer, the upper limits in the sums in \eqref{eq_attention} are changed from $k$ to $n$.

    The size of the layer is $3d^2$ plus the number of learnable parameters of $\mathcal{N}$.
\end{definition}

\begin{definition}
    A \textbf{causal linear-attention} $(n,d)$-layer is given by 3 matrices $K, Q, V \in\mathbb{R}^{d\times d}$, a kernel function $\phi\colon\mathbb{R}^d \to (0,+\infty)^d$ and an MLP $\mathcal{N}\colon\mathbb{R}^d \to\mathbb{R}^d$.

    On input $(x_1, \ldots, x_n)\in(\mathbb{R}^d)^n$, the following computations are performed to produce the output sequence $(y_1, \ldots, y_n)\in(\mathbb{R}^d)^n$. First,
   \[v_\ell = V x_\ell, \qquad q_\ell = Q x_\ell, \qquad k_\ell = K x_\ell\]
   for $\ell = 1, \ldots, n$. Then
      \begin{align}
    \label{eq_lin_attention1}
   a_k &= \frac{\sum\limits_{\ell = 1}^k v_\ell  \cdot \phi(k_\ell)^T \phi(q_k) }{\sum\limits_{\ell = 1}^k \phi(k_\ell)^T \phi(q_k)},\qquad k = 1, \ldots, n, \\
       \label{eq_lin_attention2}
   y_k &= \mathcal{N}(x_k + a_k), \qquad k = 1, \ldots, n.
    \end{align}
    In a \textbf{full linear-attention} layer, the upper limit in the sums in \eqref{eq_lin_attention1} are changed from $k$ to $n$.

    The size of the layer is $3d^2$ plus the number of learnable parameters of $\mathcal{N}$, plus the arithmetic complexity of the kernel function $\phi$. 
\end{definition}

For instance, in \citep{katharopoulos2020transformers} the following kernel function is used:
\[\phi\colon (x_1, \ldots, x_d) \mapsto (elu(x_1) + 1, \ldots, elu(x_d) + 1), \]
where
\[elu(x) = \begin{cases}x & x > 0 \\ \alpha (e^x -1) & x\le 0\end{cases}\]
for some positive constant $\alpha$. This example has $O(d)$ arithmetic complexity.
Another common choice is to use ReLU instead of elu, where the arithmetic complexity is still $O(d)$.

\begin{remark}
\label{rem:one-layer-mask-equivalence}
    For a one-layer softmax or linear-attention transformer, full and causal attention induce the same computation at the final readout token. Indeed, the answer is computed from the last token which sees all tokens in both models, and does not yet use the attention computations in these tokens.
\end{remark}

\begin{definition}
    An SSM layer of dimension $d$ is given by three maps $A\colon\mathbb{R}^{d}\to\mathbb{R}^{d\times d}, B\colon \mathbb{R}^{d}\to\mathbb{R}^d, \phi\colon\mathbb{R}^{2d} \to\mathbb{R}^d$ and vector $h_0\in\mathbb{R}^d$. On input $(x_1, \ldots, x_n)\in(\mathbb{R}^d)^n$ the output sequence of vectors $(y_1, \ldots, y_n)\in(\mathbb{R}^d)^n$ is computed as follows: for $k = 1, \ldots, n$, compute
      \begin{align}
    \label{mamba1}
        h_k &= A(x_k) h_{k - 1} + B(x_k),\\
          \label{mamba2}
        y_k &= \phi(x_k, h_k).
    \end{align}
    The size of the layer is $d^2 + 2d$  plus the arithmetic complexities of $A,B,\phi$.
\end{definition}
For instance, in Mamba \citep{gu2023mamba}, $A(x_t)$ is a constant $d\times d$ matrix, and $B(x_t)$ is defined as a multiplication of the vector $x_t$ by another constant $d\times d$ matrix. Thus, these maps have $O(d^2)$ arithmetic complexity. In turn, $\phi$ in Mamba can be seen as an MLP with the SiLU/Swish activation~\citep{ramachandran2018searching}, which can be expressed using exponentials and standard arithmetic operations. Thus, such $\phi$ also has $poly(d)$ arithmetic complexity. 

Additionally, Mamba has an optional normalization layer for $\phi$ using LayerNorm of \citet{ba2016layer}. Unfortunately, such a layer cannot be easily handled in our theoretical setting  due to its use of the square root function which is not an operation permitted in Definition \ref{def_ar}. Thus, with layernorm, we no longer have good upper bounds on the VC dimension guaranteed by Theorem \ref{thm_vc} that we require for our proofs. One can incorporate this type of layers with the use of bounds on the VC dimension for Pfaffian activation functions due to~\citet{karpinski1997polynomial}. However, the way these results are formulated requires a significant additional technical work in order to apply to our setting which we omit in the current version for simplicity.

\begin{definition}
    An RNN layer of dimension $d$ is given by two MLPs $\mathcal{N}_1, \mathcal{N}_2\colon\mathbb{R}^{2d}\to\mathbb{R}^d$. On input $(x_1,\ldots, x_n)\in(\mathbb{R}^d)^n$, the output sequence of vectors $(y_1, \ldots, y_n)\in(\mathbb{R}^d)^n$ is computed as follows: set $h_0=0\in\mathbb{R}^d$ and for $k = 1, \ldots, n$, compute
    \begin{align}
    \label{rnn1}
        h_k &= \mathcal{N}_1(x_k, h_{k-1}),\\
          \label{rnn2}
        y_k &= \mathcal{N}_2(x_k, h_k).
    \end{align}
    The size of the layer is $d$ plus the number of learnable parameters of $\mathcal{N}_1, \mathcal{N}_2$.
\end{definition}

\section{Right-hand Indexing and Causal Architectures}

\begin{definition}
    An $(n,d)$-layer $f$ is \textbf{causal} if the $ i$-th output of the layer is the function of the first $i$ input vectors, for every $i = 1, \ldots, n$.
\end{definition}
\begin{definition}
    Let $f$ be an $(n,d)$-causal layer. Its \textbf{causal complexity} is the minimal natural number $s\ge d$ such that for some $p, t\in\mathbb{N}$ with $p + t \le s$, the following holds.

    For any $k  = 2, \ldots, n$, there exist a ``fingerprint'' function
    \[\phi_k:(\mathbb{R}^d)^{k-1} \to \mathbb{R}^p\]
    such that the $k$-th output vector, $y_k$, on input $(x_1, \ldots, x_n)\in(\mathbb{R}^d)^n$, is the function of the fingerprint $f_k = \phi_k(x_1, \ldots, x_{k-1})$ and of $x_k$,
    and, moreover, this function $(f_k, x_k) \mapsto y_k$ has arithmetic complexity at most $t$.
    
\end{definition}

\begin{theorem}
\label{thm_rh}
    \begin{itemize}
    \item[a)] No $O(1)$ causal layers of causal complexity $n^{o(1)}$ can solve the right-hand indexing task. 

    \item[b)] Two full linear-attention layers of $O(1)$-size can solve the right-hand indexing task;

    \item[c)] One softmax transformer layer of $O(1)$-size can solve the right-hand indexing task.
    \end{itemize}
\end{theorem}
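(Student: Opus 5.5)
For part a), the plan is a VC-dimension argument based on Theorem~\ref{thm_vc}. Suppose $L=O(1)$ causal layers $f_1,\dots,f_L$, each of causal complexity $s=n^{o(1)}$, solve the right-hand indexing task with embedding $p$ and matrix $A$. The last position $n+1$ carries the index. We unroll the computation of $y_{n+1}$ at that position. For each layer $j$, causal complexity gives a fingerprint $F_j=\phi^{(j)}_{n+1}(\cdot)\in\mathbb{R}^{p_j}$, with $p_j\le s$, of the first $n$ inputs to layer $j$. It also gives a map $(F_j, x^{(j-1)}_{n+1})\mapsto x^{(j)}_{n+1}$ of arithmetic complexity at most $s$. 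By causality, all inputs at positions $1,\dots,n$ to every layer depend only on $\sigma_1,\dots,\sigma_n$ and not on $i$. Hence each fingerprint $F_j$ is a function of $\sigma$ alone.

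We then define a concept class $C(x,y)$. The input $x\in\mathbb{R}^d$ is the embedding $p(i,n+1)$ of the index token. The parameters $y=(F_1,\dots,F_L)\in\mathbb{R}^{\sum p_j}$ number at most $Ls$. The concept evaluates the $L$ maps in sequence, applies $A$ restricted to the two output coordinates, and outputs $1$ iff $\alpha_1>\alpha_0$. Its arithmetic complexity is at most $Ls+O(d)=n^{o(1)}$, since $d\le s$ and one comparison suffices at the end.

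This class shatters the $n$ points $p(1,n+1),\dots,p(n,n+1)$. For any target $\alpha\in\{0,1\}^n$, set $\sigma=\alpha$ and take the fingerprints computed on that input; correctness of the model gives $C(p(i,n+1),y)=\sigma_i$ for all $i$. Thus $n\le VCdim(C)=O((Ls+O(d))^2(Ls)^2)=n^{o(1)}$, a contradiction for large $n$.

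The main obstacle is the bookkeeping. We must make sure the per-layer maps compose legitimately within the arithmetic-complexity model, with the fingerprints entering as free parameters. We also need a sanity check that the layer-$j$ fingerprint really does not depend on $i$; this holds because layer outputs at positions $\le n$ never see position $n+1$.

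For part c), we use one softmax layer (cf. Remark~\ref{rem:one-layer-mask-equivalence}). Embed bit token $\ell$ with key $k_\ell=(\ell,1)$ scaled, and value encoding $2\sigma_\ell-1$ in some coordinate. Let the index token have query $q=C(2i,-i^2)$, so that $k_\ell^Tq=C(2i\ell-i^2)=C(i^2-(\ell-i)^2)$. This is maximized uniquely at $\ell=i$ with a gap of $C$, and is the standard quadratic trick from \citet{strobl2025concise}. The index token itself gets a key giving a very negative score, or a zero value. With $C=\Theta(\log n)$ the attention concentrates on position $i$, so $a_{n+1}\approx 2\sigma_i-1$ and the readout sign decides. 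To keep $O(1)$ parameters while $C$ grows, we absorb the scaling into the embedding $p$, which is free.

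For part b), we use two full linear-attention layers. The first layer, non-masked, lets bit positions read the index: the last token writes $i$ into a value, and with a suitable kernel each position $\ell$ averages to obtain a known function of $i$, together with the count normalisation. The MLP at position $\ell$ then computes an indicator-like quantity $g_\ell=\sigma_\ell\cdot\mathrm{ReLU}(1-|\ell-i|)$ with $O(1)$ ReLU units, using $\ell$ from the embedding. The second layer lets the last token average uniformly over all positions with constant $\phi$ values, getting $\frac1{n+1}\sum_\ell g_\ell=\sigma_i/(n+1)$. It compares this against a threshold $\tfrac{1}{2(n+1)}$, which is stored in the embedding, to output $\sigma_i$.
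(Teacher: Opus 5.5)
\textbf{Part~c) fails as written.} The identity you invoke is false: $i^2-(\ell-i)^2=2i\ell-\ell^2$, not $2i\ell-i^2$. With $k_\ell=(\ell,1)$ and $q=C(2i,-i^2)$ the score is $C(2i\ell-i^2)$. The term $-Ci^2$ is the same for every key, so it cancels in the softmax. What remains, $2Ci\ell$, is strictly increasing in $\ell$.

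Consequently the attention concentrates on the last bit position $\ell=n$, and the layer outputs approximately $2\sigma_n-1$. It answers wrongly whenever $i\neq n$ and $\sigma_i\neq\sigma_n$.

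The fix is that the quadratic penalty must depend on the attended position, not on the query. Put $(\ell,\ell^2)$ into the (free) embedding of bit token $\ell$, and use keys $k_\ell=(\ell,\ell^2)$ with query $q=C(2i,-1)$. Then $k_\ell^Tq=C\bigl(i^2-(\ell-i)^2\bigr)$, which is uniquely maximized at $\ell=i$ with gap at least $C$.

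After this repair your route is a legitimate alternative to the paper's. The paper embeds $(\cos k,\sin k)$ and $(\cos i,\sin i)$, so the score is $\cos(k-i)$. Its gap $1-\max_{0<|m|<n}\cos m$ shrinks with $n$, because integers approach multiples of $2\pi$ ever more closely, and so its scaling constant must grow correspondingly. Your repaired gap is exactly $C$ for every $n$, so the required scaling is explicit.

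\textbf{Part~a)} is essentially the paper's proof. The fingerprints of the first $n$ positions serve as free parameters, the index embedding is the input, and Theorem~\ref{thm_vc} is played against the shattering of $p(1,n+1),\dots,p(n,n+1)$. Your layer-by-layer bookkeeping and your check that the fingerprints do not depend on $i$ are correct.

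\textbf{Part~b)} is correct but takes a different route from the paper. You use uniform averaging (e.g.\ $K=0$) in both layers. The first layer broadcasts $i/(n+1)$, the MLP computes the exact indicator $\mathrm{ReLU}(\sigma_\ell-|\ell-i|)$, and the last token thresholds $\sigma_i/(n+1)$. The paper instead gives the last token, and then the marked position $i$, an overwhelming attention weight, and it works up to small additive errors. Your version is exact and avoids any approximation analysis.
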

\begin{proof}
\begin{itemize}
    \item[a)] Assume for contradiction that there exist $L = O(1)$ causal layers $f_1, \ldots, f_L$ of causal complexity $n^{o(1)}$ that solve the right-hand indexing task
    \[(\sigma_1, \ldots, \sigma_n, i)\mapsto \sigma_i.\]

    We employ the definition of the causal complexity with respect to the $(n+1)$-st token, one with the index $i$. First, this input token is transformed via some input embedding into a vector $x_{n+1}(i)\in\mathbb{R}^d$. Next, the input binary word $\sigma = \sigma_1\ldots \sigma_n\in\{0, 1\}^n$ is first transformed into a sequence $(x_1, \ldots, x_n)$ of input vectors in the first $n$ tokens. Then $(x_1, \ldots, x_n)$, by definition of causal complexity, can be transformed in $L$ fingerprints of dimension $p = n^{o(1)}$:
    \[\sigma\mapsto f_{n+1}(\sigma)\in\mathbb{R}^{O(p)}\]
    such that the outputs of all $L$ layers in the last position, including the last output $y_{n+1}^L$, can be computed from $f_{n+1}(\sigma)$ and $x_{n+1}(i)$ in arithmetic complexity $n^{o(1)}$. Then from $y_{n+1}^L$ one can compute the answer $\sigma_i$ to the indexing task through  multiplying $y_{n+1}^L$ by a $2\times d$ matrix, and computing the maximal coordinate of the resulting vector. The last part requires $O(d) = n^{o(1)}$ arithmetic complexity (recall that causal complexity upper bounds $d$).

One can perform these computations for arbitrary real vectors $f_{n+1}\in\mathbb{R}^{O(p)}$ and $x_{n+1}\in\mathbb{R}^d$, not just from those that come as inputs to the indexing problem. That is, when we omit inputs in $f_{n+1}(\sigma),x_{n+1}(i)$, we no longer consider these vectors as functions of $\sigma$ and $i$ but allow them to take arbitrary values. This gives a concept class $C(x_{n+1}, f_{n+1})$, which, on the one hand, has arithmetic complexity, and thus VC dimension by Theorem \ref{thm_vc}, at most $n^{o(1)}$, but on the other hand, has VC dimension at least $n$ since for inputs $x_{n+1}(1), \ldots, x_{n+1}(n)$ we have:
\[C(x_{n+1}(i),f_{n+1}(\sigma)) = \sigma_i, \qquad i = 1, \ldots, n,\]
for every $\sigma\in\{0, 1\}^n$.

\item[b)] Using a positional encoding that assigns 0 to every position except the last one, to which it assigns a sufficiently large value, we can make the last token receive attention weight $100n^2$, while all other tokens receive weight $1$. Since there is no masking, every token can then recover the index $i$ from the last token up to a small additive error, say $0.01$. Next, we can devise an output MLP of the first layer that produces 0 at positions $k\neq i$, while at position $i$ it outputs a large marker together with the local bit $\sigma_i$. Indeed, one can do it by computing the absolute value $|k - i'|$ -- if it is less than $0.01$, then $k = i$, and if it is greater than $0.99$, then $k\neq i$. Thus, the first-layer output at position $i$ contains a large marker together with $\sigma_i$, while all other positions output $0$. Repeating the same attention mechanism in the second layer gives this value to every token, including the last one.

\item[c)] Imagine that the positional encoding of the $k$-th position contains $\begin{pmatrix}\cos k \\ \sin k\end{pmatrix}$ while the index $i$ is embedded as $\begin{pmatrix}\cos i \\ \sin i\end{pmatrix}$. 
Then we can make sure that the attention from the index position to the $k$-th position is of the form
\[\left\langle\begin{pmatrix}\cos k \\ \sin k\end{pmatrix},  \begin{pmatrix}\cos i \\ \sin i\end{pmatrix}\right\rangle \]
and thus is strictly maximized at $k = i$. Through multiplying these products by a sufficiently large constant, the softmax function will get us, up to a small error, the content of the $i$-th position that allows us to restore the $i$-th bit.
\end{itemize}
    
\end{proof} 


We now observe that item a) of Theorem \ref{thm_rh} applies to RNNs, SSMs and causal linear-attention layers with reasonable assumptions on their size.

\begin{proposition} $n^{o(1)}$-size SSM, RNN, and causal linear-attention layers 
have causal complexity $n^{o(1)}$.
\end{proposition}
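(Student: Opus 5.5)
The plan is to exhibit, for each of the three layer types, an explicit fingerprint $\phi_k$ of dimension $p=n^{o(1)}$ and an explicit algorithm computing $y_k$ from $(f_k,x_k)$ with arithmetic complexity $t=n^{o(1)}$. The size of the layer already upper bounds $d$, so $s=\max(d,p+t)$ will be $n^{o(1)}$. In each case the fingerprint is the natural ``recurrent state'' after the first $k-1$ tokens. The remaining work is to bound the cost of one recurrence step plus the output map, using that an MLP with $m$ parameters (ReLU evaluated by a comparison-and-jump) has arithmetic complexity $O(m)$.

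\textbf{RNN.} Take $\phi_k(x_1,\ldots,x_{k-1})=h_{k-1}\in\mathbb{R}^d$, so $p=d$. Then $y_k=\mathcal{N}_2(x_k,\mathcal{N}_1(x_k,h_{k-1}))$ is computed by two MLP evaluations, which costs $O(\text{size})=n^{o(1)}$ operations.

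\textbf{SSM.} Again take $\phi_k=h_{k-1}\in\mathbb{R}^d$. Computing $A(x_k)$ and $B(x_k)$ costs $AC(A)+AC(B)$. The matrix-vector product $A(x_k)h_{k-1}$ and the addition cost $O(d^2)$, and then $\phi(x_k,h_k)$ costs $AC(\phi)$. All of these are bounded by the size, hence $n^{o(1)}$.

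\textbf{Causal linear attention.} This is the only case needing care, because $a_k$ a priori depends on all previous keys through $\phi(q_k)$. I will use the standard linearization. Set
\[
S_{k-1}=\sum_{\ell<k} v_\ell\,\phi(k_\ell)^T\in\mathbb{R}^{d\times d},\qquad z_{k-1}=\sum_{\ell<k}\phi(k_\ell)\in\mathbb{R}^d,
\]
and let the fingerprint be $(S_{k-1},z_{k-1})$, of dimension $p=d^2+d$. Given $x_k$, compute $v_k=Vx_k$, $k_k=Kx_k$, $q_k=Qx_k$ at cost $O(d^2)$, and evaluate $\phi$ twice. Form $S_k=S_{k-1}+v_k\phi(k_k)^T$ and $z_k=z_{k-1}+\phi(k_k)$ at cost $O(d^2)$. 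Then $a_k=S_k\phi(q_k)/(z_k^T\phi(q_k))$ costs $O(d^2)$, and $y_k=\mathcal{N}(x_k+a_k)$ costs the MLP size. The denominator is positive since $\phi$ maps into $(0,\infty)^d$, so the division is legitimate.

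The main (mild) obstacle is this linear-attention case: one must notice that the fingerprint needs dimension $d^2$ rather than $d$, and check that $d^2$ is still $n^{o(1)}$, which holds because the size includes $3d^2$. A second minor point is the $k=1$ edge case and the fixed initial state $h_0$. The definition only requires $k\ge 2$, and $h_0$ is a constant, so it can be output at no cost.
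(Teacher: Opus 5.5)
Your proposal is correct and matches the paper's proof. The fingerprint is the hidden state $h_{k-1}$ for RNNs and SSMs, and the pair of prefix sums $\sum_{\ell<k} v_\ell\phi(k_\ell)^T$, $\sum_{\ell<k}\phi(k_\ell)$ for causal linear attention; finishing $y_k$ then costs $d^{O(1)}$ plus the complexities of the MLPs (and of $A,B,\phi$ for SSMs, or of the kernel $\phi$ for linear attention), all bounded by the size. Your extra bookkeeping (the $d^2+d$ fingerprint dimension, positivity of the denominator, and $s=\max(d,p+t)$) spells out details the paper leaves implicit without changing the argument.
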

\begin{proof} Let $(x_1, \ldots, x_n)\in(\mathbb{R}^d)^n$ be an input. 
    For SSMs and RNNs, the fingerprint of $x_1, \ldots, x_{k - 1}$ would be the vector $h_{k - 1}$ from \eqref{mamba1} and \eqref{rnn1}, respectively.
    Given this vector and $x_k$, one computes $y_k$ according to formulas (\ref{mamba1}--\ref{mamba2}) and  (\ref{rnn1}--\ref{rnn2}). This can be performed in arithmetic complexity $n^{o(1)}$.  Here one uses the fact that the dimension, the number of learnable parameters of the MLPs in the equations, and the arithmetic complexity of $A,B,\phi$ in case of SSMs, are bounded by $n^{o(1)}$ due to the corresponding bound on the size of a layer.

    For causal linear-attention layer, the fingerprint of $x_1, \ldots, x_{k - 1}$ consists of one $d\times d$ matrix, and one $d$-dimensional row, computed by:
    \[\sum\limits_{\ell = 1}^{k -1} v_\ell \phi(k_\ell)^T,\qquad \sum\limits_{\ell = 1}^{k -1}  \phi(k_\ell)^T.\]
The output $y_k$ is then computed via formulas in (\ref{eq_lin_attention1}--\ref{eq_lin_attention2}). Complementing the computation, given two sums above, requires now just $d^{O(1)} = n^{o(1)}$ arithmetic operations that come from matrix products, and also computation of $\phi$ whose arithmetic complexity is bounded by $n^{o(1)}$ due to the bound on the size of the layer.
    
\end{proof}


\section{Left-hand Indexing: RNNs prevail}
\begin{theorem}
\label{thm_left}
\begin{itemize}
       
        \item[a)] One $O(1)$-size RNN layer can solve the left-hand indexing task. 

        \item[b)] No 1-layer $n^{o(1)}$-size SSM, linear-attention or softmax transformer  can solve the left-hand indexing task.

        \item[c)] Each of the following -- 2 $O(1)$-size SSM layers, 2 $O(1)$-size linear-attention layers, 2 $O(1)$-size softmax transformer layers -- can solve the left-hand indexing task. This holds regardless of whether the transformer layers are full or causal.
\end{itemize}
\end{theorem}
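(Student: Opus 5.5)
We prove the three items separately. For a), we build a one-layer RNN whose hidden state has three coordinates: a countdown $c$, a stored bit $b$, and a flag. The first token embeds $i$, and $\mathcal{N}_1$ sets $c = i$. Each bit token $\sigma_k$ carries a constant marker, and on it the update decrements $c$ by one. The bit is latched when $c$ hits zero, that is, when $c_{\mathrm{prev}} \in [1, 2)$: then $b$ is set to $\sigma_k$, and otherwise $b$ keeps its old value. Because all quantities are integers, every indicator we need is a fixed piecewise-linear function of $O(1)$ ReLUs. For instance $\mathbb{1}[c=1] = \mathrm{ReLU}(c) - 2\,\mathrm{ReLU}(c-1) + \mathrm{ReLU}(c-2)$ on integers. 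The gated update $b' = \mathbb{1}[c=1]\,\sigma_k + (1-\mathbb{1}[c=1])\,b$ is then written with bits in $\{0,1\}$ as $\mathrm{ReLU}(\sigma_k + \mathbb{1}[c=1] - 1) + \mathrm{ReLU}(b - \mathbb{1}[c=1])$. At the end token, $\mathcal{N}_2$ outputs $(b, 1-b)$, and $A$ reads off the answer. This layer has $O(1)$ parameters, independent of $n$, since $i$ is stored as a real number and not in unary.

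For c), we reuse the mechanisms from the proof of Theorem~\ref{thm_rh}, with the roles of the first and last tokens swapped. In the first layer every bit token $k$ attends to the first token and so learns the index, up to a small error. For softmax and linear attention this works even with masking, since the first token is always visible: its positional encoding gives it a dominating weight, such as $100n^2$ as in Theorem~\ref{thm_rh} b). The MLP then compares $k$ with $i'$, as before, and outputs a large marker together with $\sigma_k$ exactly when $k=i$, and $0$ otherwise. In the second layer the end token attends. For softmax this is either the $\cos/\sin$ matching of Theorem~\ref{thm_rh} c) or attention to the large marker. For linear attention, the normalized sum is dominated by the marked position, because the markers weight it by a factor of order $n^2$ over the rest. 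For SSMs, the first layer uses the state $h_k = h_{k-1}$ with $B$ injecting $i$ only at the first token, so every position knows $i$, and $\phi$ emits $\sigma_k$ times the indicator $[k=i]$. The second layer just accumulates the sum $h_k = h_{k-1} + B(x_k)$, which at the end token equals $\sigma_i$.

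For b), we use the VC argument from Theorem~\ref{thm_rh} a), applied to the end token, with a split that isolates the first token. For an SSM, unrolling gives $h_{n+1} = M h_1 + g$. Here $M = \prod_k A(x_k)$ and $g$ depend only on $\sigma$ and the end token, and $h_1 = A(x_1)h_0 + B(x_1)$ depends only on $i$. This is exactly where the linearity of the update is used. So the output is computed with $n^{o(1)}$ arithmetic complexity from the parameters $(M, g)$, which depend on $\sigma$ and number $d^2 + d = n^{o(1)}$, together with the input $h_1(i)$. For attention, the end-token output depends on the following quantities. The first token contributes through its key and value. The remaining tokens contribute through sums: $\sum_\ell v_\ell \phi(k_\ell)^T$ and $\sum_\ell \phi(k_\ell)^T$ for linear attention, and for softmax $\sum_{\ell} v_\ell e^{k_\ell^Tq}$ and $\sum_\ell e^{k_\ell^T q}$, where $q$ is the fixed query of the end token. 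The end token itself is fixed. In every case we get $n^{o(1)}$ real parameters depending on $\sigma$, and an input depending only on $i$. The indices $i=1,\ldots,n$ are then shattered, which forces VC dimension at least $n$. On the other hand, Theorem~\ref{thm_vc} gives VC dimension at most $n^{o(1)}$, a contradiction.

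The main obstacle is the softmax case of b). Its fingerprint is valid only because the query of the end token does not depend on the input; we must check this, and it holds because in one layer the end token's input is just its embedding. A second point to check is that the SSM map $h_1 \mapsto y_{n+1}$ includes the cost of $\phi$ and of the final comparison, both of which stay within $n^{o(1)}$.
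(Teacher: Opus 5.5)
Your proposal is correct and follows essentially the same route as the paper. In (b) you use the same VC-dimension argument, splitting the end-token computation into an input depending only on $i$ (the first token's contribution, or $h_1$ for the SSM) and $n^{o(1)}$ parameters depending only on $\sigma$; in (c) you use the same broadcast-the-index, mark-position-$i$, then retrieve constructions; and in (a) your countdown from $i$ is only a cosmetic variant of the paper's comparison of the stored $i$ with the current bit position (it has the minor advantage of not needing positional information in the bit embeddings).
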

\begin{proof}
    \begin{itemize}
        \item[a)] The RNN stores $i$ and one answer coordinate. At bit position $k$, the update MLP computes a binary marker for $i=k$ and writes $\sigma_k$ to the answer coordinate only at that position. Since exactly one position is selected, the final state contains $\sigma_i$.

        \item[b)] We start with the lower bound for transformers (the same proof works both for linear-attention and softmax). Let the input to the indexing task be $i, (\sigma_1, \ldots, \sigma_n)$. There are $n +2$ tokens, the first token with $i$, next $n$ tokens with input bits, and the last end-token. Note that for both models, the computation of the attention in the last token  (see \eqref{eq_attention} and \eqref{eq_lin_attention1}) can be decomposed into two parts, one determined by $i$ and the other by the string $\sigma = \sigma_1 \ldots \sigma_n$, more precisely:
        \begin{equation}
        \label{a_n}
            a_{n + 2} = \frac{\alpha(i) + \beta(\sigma)}{\gamma(i) + \rho(\sigma)},
       \end{equation}
where $\alpha(i),\beta(\sigma) \in\mathbb{R}^d$, $\gamma(i), \rho(\sigma)\in(0,+\infty)$. 
Consider a concept class $C$ that treats coordinates of $(\alpha(i), \gamma(i))$ as $d + 1$ input coordinates and $(\beta(\sigma), \rho(\sigma))$
 as $d + 1$ parameters, computes $a_{n+2}$ as in \eqref{a_n}, and then performs the rest of the computations in the last token, getting the output of the indexing task. More specifically, it sums up $a_{n+2}$ with the vector $x_{n +2}$ -- embedding of the last token, applies the output MLP $\mathcal{N}$ to the sum, multiplies the resulting vector by the output-distribution matrix $A$, and computes which output class has the largest score. On the one hand, if the size of the layer is $n^{o(1)}$, then $d = n^{o(1)}$, the number of learnable parameters in $\mathcal{N}$ is $n^{o(1)}$, and hence the number of parameters and arithmetic complexity of $C$ is $n^{o(1)}$, together with its VC dimension by Theorem \ref{thm_vc}. On the other hand, if for contradiction such a layer solves the indexing task, then
 \[C\big((\alpha(i),\gamma(i)),(\beta(\sigma),\rho(\sigma))\big)=\sigma_i,\]
 meaning that the VC dimension of $C$ is at least $n$ as, choosing distinct $\sigma\in\{0, 1\}^n$, we can realize any Boolean function on $n$ inputs:
 \[(\alpha(1), \gamma(1)), \ldots, (\alpha(n), \gamma(n)).\]

 The structure of the  argument against $n^{o(1)}$-size SSM layer, computing the left-hand indexing, is similar. Assuming for contradiction that such a layer exists, we construct a concept class 
$C(\alpha, \beta)$ with $n^{o(1)}$ number of parameters and $n^{o(1)}$ arithmetic complexity, satisfying:
\[C(\alpha(i), \beta(\sigma)) = \sigma_i\]
for some functions $\alpha(i), \beta(\sigma)$, obtaining a contradiction since its VC dimension has to be at least $n$.  More specifically, we let $\alpha(i) = h_1$ be the state after processing the first token (one with the index) in (\ref{mamba1}--\ref{mamba2}). The further evolution of the state $h_1$ happens through a composition of $n + 1$ affine functions $\mathbb{R}^d \to \mathbb{R}^d$, determined by $\sigma_1, \ldots, \sigma_n$. This whole composition can be given by a $d\times d$ matrix $A(\sigma)$ and a $d$-vector $b(\sigma)$. Overall, we get the following formula for the state after processing the whole input
\[h_{n+2} = A(\sigma) \alpha(i) + b(\sigma).\]
Coordinates of $\beta(\sigma) = (A(\sigma), b(\sigma))$ will be viewed as parameters, and there are $d^2 +d = n^{o(1)}$ of them. Computing $h_{n+2}$ takes $poly(d) = n^{o(1)}$ standard arithmetic operations.
The output of the SSM layer is then computed by applying the output MLP $\phi$ to $h_{n+2}$ and the embedding of the last token $x_{n+2}$, multiplying by the output-distribution matrix, and taking the highest-scoring output class. Again, this all takes arithmetic complexity $n^{o(1)}$ if the size of the layer is $n^{o(1)}$. Finally, we have $C(\alpha(i), \beta(\sigma)) = \sigma_i$ since the layer is supposed to solve the indexing task.

        \item[c)] Transformers -- both linear-attention and softmax, causal and full -- essentially can mimic the solution from item b) of Theorem \ref{thm_rh}. By putting a lot of weight to the first position in the positional encoding, all subsequent tokens can obtain the value of the index $i$ (up to a small additive error, say, $0.01$). Since the index is in the beginning, the masking does not prevent us from doing it. Then again, the output MLP of the first layer can use this to output a large number at the $i$-th position, and 0 at all other positions. Finally, the second layer can now focus its attention on the $i$-th position, retrieving the value of the $i$-th bit. Since the output is computed at the last token, masking would not be a problem. 

        A 2-layer SSM is capable of imitating this solution as well. Note that due to its linear update rule, it can in particular simply sum up the input vectors. By putting the index $i$ to a designated coordinate for the first token, and 0 for other tokens, and summing up values in this coordinate, we get $i$ to every position after the first layer. Then the output MLP can transform it into the value of the $i$-th bit at the $i$-th position and 0 at the rest of the positions. Summing up these MLP outputs at the second layer gives us the $i$-th bit.
    \end{itemize}
\end{proof}

\section{Experiments}

The preceding sections establish asymptotic separations for indexing. We now test whether the same picture appears in finite models trained from scratch. We call configurations supported by our positive constructions \emph{constructive} (\succesmark), and configurations on the asymptotically hard side of our results \emph{stress tests} (\crossmark). Stress-test configurations may still succeed at small finite lengths, but their performance is expected to deteriorate as $n$ grows. These experiments do not replace the lower bounds: training failure is not evidence of impossibility.

The appendix provides full hyperparameters, ablations, parameter counts, runtimes, and length-shifted evaluations. The code for our experiments can be found at \[\text{\href{https://github.com/visho33/IndexRetrieval}{\includegraphics[height=1em]{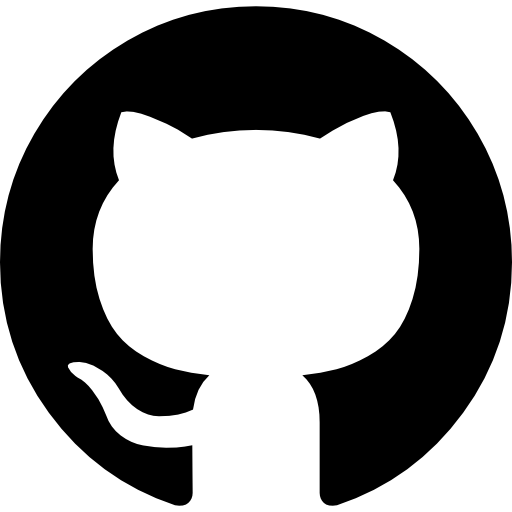} \texttt{https://github.com/visho33/IndexRetrieval}}}.\]

\subsection{Experimental Setup}

We train models on binary indexing for $n\in\{8,16,32,64\}$, using five random seeds per configuration and online-generated examples. We use the query-token readout for RHI and the end-token readout for LHI.

We compare full and causal variants of softmax and linear attention, together with GRUs and Mamba2. All models have width 16. Attention models use two heads and feedforward width 64, and linear attention uses the ELU+1 feature map. By Remark~\ref{rem:one-layer-mask-equivalence}, equivalent one-layer full/causal pairs are reported once. Further implementation details are provided in the supplement.

Following an ablation over learned, sinusoidal, rotary, polynomial, and no positional encoding, we use polynomial positional encodings throughout the main sweep. Bit tokens encode their type and value, whereas query tokens additionally contain the features $((i-1)/n)$ and $((i-1)/n)^2$.

Models are trained with AdamW, learning rate $10^{-3}$, batch size 256, and at most 500 epochs, using 50,000 newly generated examples per epoch. A run is considered successful if its maximum held-out accuracy during training reaches $1.0$.

\subsection{Results}

Table~\ref{tab:expected-outcomes-and-results} reports the number of successful seeds for each of the 23 distinct configurations. Constructive configurations succeed in $158/160$ runs overall: all $40/40$ runs succeed for $n\leq32$, and $38/40$ succeed at $n=64$. The only failures are one seed of two-layer full softmax attention and one seed of two-layer Mamba, both on LHI. Stress-test configurations display the opposite trend. Their success count decreases from $70/75$ at $n=8$, to $51/75$ at $n=16$, $13/75$ at $n=32$, and $0/75$ at $n=64$. Thus, the empirical separation between the two theory-guided groups becomes increasingly sharp as the sequence length grows.

For right-hand indexing, all stress-test families succeed at $n=8$, while additional depth allows some causal linear-attention and Mamba models to remain successful at $n=32$. Nevertheless, every causal linear-attention, GRU, and Mamba configuration fails the criterion at $n=64$, for all tested depths. One-layer full and causal linear attention also transition from $5/5$ successes at $n\leq16$ to $0/5$ at $n\geq32$.

For left-hand indexing, the breakdown occurs earlier. One-layer softmax attention fails at every tested length, whereas one-layer linear attention and Mamba succeed at $n=8$ but fail for every seed once $n\geq16$. In contrast, the constructive one-layer GRU and two-layer models succeed almost uniformly.

Overall, the experiments qualitatively align with the theoretical
separations while exhibiting substantial finite-size effects. Success at small $n$ does not contradict an asymptotic lower bound, and failure under our fixed optimization protocol is not evidence of impossibility. Several unsuccessful runs at $n=64$ also remain above chance, as detailed in the supplementary material.

\begin{table}[!t] 
\centering 
\begin{tabularx}{0.6\columnwidth}{clcccccc} 
\toprule
\multirow{2}{*}{Task}
& \multirow{2}{*}{Model family}
& \multirow{2}{*}{$L$}
& \multirow{2}{*}{Thm.}
& \multicolumn{4}{c}{Bit string length \(n\)} \\
\cmidrule(lr){5-8}
& & & & 8 & 16 & 32 & 64 \\
\midrule 
\righthand & Softmax & 1 & \succesmark
& \score{5} & \score{5} & \score{5} & \score{5} \\

\righthand & Linear & 1 & \crossmark
& \score{5} & \score{5} & \score{0} & \score{0} \\

\righthand & Full linear & 2 & \succesmark
& \score{5} & \score{5} & \score{5} & \score{5} \\

\righthand & Causal linear & 2 & \crossmark
& \score{5} & \score{5} & \score{2} & \score{0} \\

\righthand & Causal linear & 3 & \crossmark
& \score{5} & \score{5} & \score{3} & \score{0} \\

\righthand & Causal linear & 4 & \crossmark
& \score{5} & \score{5} & \score{4} & \score{0} \\

\righthand & GRU & 1 & \crossmark
& \score{5} & \score{0} & \score{0} & \score{0} \\

\righthand & GRU & 2 & \crossmark
& \score{5} & \score{3} & \score{0} & \score{0} \\

\righthand & GRU & 3 & \crossmark
& \score{5} & \score{5} & \score{0} & \score{0} \\

\righthand & GRU & 4 & \crossmark
& \score{5} & \score{4} & \score{0} & \score{0} \\

\righthand & Mamba & 1 & \crossmark
& \score{5} & \score{4} & \score{0} & \score{0} \\

\righthand & Mamba & 2 & \crossmark
& \score{5} & \score{5} & \score{0} & \score{0} \\

\righthand & Mamba & 3 & \crossmark
& \score{5} & \score{5} & \score{1} & \score{0} \\

\righthand & Mamba & 4 & \crossmark
& \score{5} & \score{5} & \score{3} & \score{0} \\

\lefthand & GRU & 1 & \succesmark
& \score{5} & \score{5} & \score{5} & \score{5} \\

\lefthand & Mamba & 1 & \crossmark
& \score{5} & \score{0} & \score{0} & \score{0} \\

\lefthand & Mamba & 2 & \succesmark
& \score{5} & \score{5} & \score{5} & \score{4} \\

\lefthand & Softmax & 1 & \crossmark
& \score{0} & \score{0} & \score{0} & \score{0} \\

\lefthand & Full softmax & 2 & \succesmark
& \score{5} & \score{5} & \score{5} & \score{4} \\

\lefthand & Causal softmax & 2 & \succesmark
& \score{5} & \score{5} & \score{5} & \score{5} \\

\lefthand & Linear & 1 & \crossmark
& \score{5} & \score{0} & \score{0} & \score{0} \\

\lefthand & Full linear & 2 & \succesmark
& \score{5} & \score{5} & \score{5} & \score{5} \\

\lefthand & Causal linear & 2 & \succesmark
& \score{5} & \score{5} & \score{5} & \score{5} \\

\midrule 
\multicolumn{3}{c}{\multirow{2}{*}{\textit{Total successful runs}}}
& \succesmark & 40 & 40 & 40 & 38 \\

& & & \crossmark & 70 & 51 & 13 & 0 \\ 
\bottomrule 
\end{tabularx} 

\caption{
Successful seeds out of five by task, model, depth $L$, and bit-string length $n$. For $L=1$, Softmax and Linear merge the equivalent full and causal variants. \protect\succesmark{} and \protect\crossmark{} denote constructive and stress-test configurations. The final rows aggregate 40 constructive and 75 stress-test runs per length.
}
\label{tab:expected-outcomes-and-results} 
\end{table}

\section{Conclusion}

We used indexing to isolate how token order and information bottlenecks affect sequence models. Causal complexity yields precision-independent lower bounds for constant-depth RNNs, SSMs, and causal linear-attention transformers on right-hand indexing, whereas left-hand indexing gives a one-layer advantage to RNNs. Our experiments are consistent with these theoretical separations and show that their signatures are already visible at moderate input lengths. They also reveal substantial finite-size effects: success at small $n$ does not contradict an asymptotic lower bound, and training failure is not evidence of impossibility. More broadly, our results show that even a minimal retrieval primitive can expose fundamental differences in how sequence architectures route and preserve information. Extending causal complexity to richer primitives and more general architectural components is a direction for future work.

\appendix

\section{Further Experimental Details}

This supplement provides the experimental details omitted from the main paper: the full training and evaluation protocol, the positional-encoding ablation used to choose the final setup, complete held-out results for each model family, parameter counts, runtimes, and out-of-distribution evaluations.

The main paper reports 23 distinct configuration classes. For one-layer attention models, the full and causal variants are theoretically equivalent at the final readout token, so they are merged there. In this supplement, we sometimes report them separately because both variants were trained independently. Whenever this affects a total, we state whether we are counting merged configuration classes or individual trained variants.

All experiments use the binary indexing task defined in the main paper. Each run is evaluated once per epoch on a fixed held-out set of 2,000 examples. Training examples are generated online, with 50,000 examples per epoch. Training stops when the held-out binary cross-entropy reaches at most $10^{-6}$ or after 500 epochs.

A run is considered successful if its maximum held-out accuracy during training reaches $1.0$. Since indexing has a deterministic correct answer, requiring perfect held-out accuracy gives a simple and strict empirical criterion. However, success on this finite held-out set does not prove that the trained model solves every possible input.

\subsection{Protocol Details}

Table~\ref{tab:supp-experimental-setup} lists the final training and evaluation settings used in the main sweep. The positional-encoding alternatives considered during development are described separately in the next section.

For each example, the index $i$ is sampled uniformly from $\{1,\ldots,n\}$, and the input bits are sampled independently and uniformly from $\{0,1\}$. The target is $\sigma_i$. Training examples are generated online, while the held-out set is generated once and kept fixed throughout each run. For a run with seed $s$, we initialize Python, NumPy, PyTorch, and the CUDA random-number generators with $s$. The held-out set uses seed $s+10{,}000$, while the length-shifted sets at $n/2$ and $2n$ use seeds $s+20{,}000$ and $s+30{,}000$, respectively.

Softmax attention is implemented using Hugging Face BERT blocks (except rotary, which uses RoFormer). We explicitly set the model width, number of layers, number of attention heads, feed-forward width, causal or full attention mask, and zero dropout. Other internal choices follow the defaults of the installed Transformers version.

GRUs use PyTorch's \texttt{nn.GRU} with input and hidden size 16, one direction, and zero dropout. Biases and the initial hidden state follow the PyTorch defaults.

Mamba2 uses state dimension 16, head dimension 8, expansion factor 1, and convolution width 3. Caching, fused normalization, and fused cross-entropy paths are disabled. Other unspecified Mamba2 options follow the defaults of the installed \texttt{flash-linear-attention} version. We refer to this model family as \emph{Mamba} in the result tables.

Linear attention is implemented locally using pre-normalized blocks, bias-free query, key, value, and output projections, the $\operatorname{ELU}(x)+1$ feature map, denominator epsilon $10^{-6}$, and zero dropout. Full linear attention uses global key--value sums, whereas causal linear attention uses prefix sums. Other unspecified options follow the PyTorch defaults.

\newpage

\begin{table}[!h]
\centering
\begin{tabular}{ll}
\toprule
Setting & Value \\
\midrule
Task sizes & $n\in \{8,16,32,64\}$ \\
Seeds & $\{0,1,2,3,4\}$ \\
Training examples & 50000 per epoch, drawn on demand \\
Held-out examples & 2000 fixed samples \\
Evaluation frequency & Once per epoch \\
OOD test sizes & $n/2$ and $2n$ \\
Success criterion & Maximum held-out accuracy $1.0$ \\
Readout convention & RHI: query token, LHI: end token \\
Input query features & $(i-1)/n$ and $((i-1)/n)^2$ \\
Architectures &
\begin{tabular}[c]{@{}l@{}}
Full and causal softmax attention \\
Full and causal linear attention \\
GRU \\
Mamba2
\end{tabular} \\
Model width & 16 \\
Attention heads & 2 \\
Feed-forward width & 64 \\
Linear-attention feature map & $\operatorname{ELU}(x)+1$ \\
Dropout & 0 \\
Positional encoding & Polynomial \\
Loss & Binary cross-entropy with logits \\
Optimizer & AdamW \\
Learning rate & $10^{-3}$ \\
AdamW $\epsilon$ & $10^{-8}$ \\
AdamW betas & PyTorch default $(0.9,0.999)$ \\
Weight decay & 0 \\
Batch size & 256 \\
Gradient clipping & Global norm 5 \\
Early stopping & Held-out binary cross-entropy loss $\leq 10^{-6}$ \\
Scheduler & ReduceLROnPlateau \\
Scheduler metric & Training binary cross-entropy \\
Scheduler factor & 0.5 \\
Scheduler patience & 30 epochs \\
Min. learning rate & $10^{-5}$ \\
Mixed precision & No \\
Prediction rule & $\operatorname{sigmoid}(\text{logit})\geq 0.5$ \\
Max. epochs & 500 \\
\bottomrule
\end{tabular}
\caption{Final training and evaluation settings used in the main sweep.}
\label{tab:supp-experimental-setup}
\end{table}

Experiments were run on a Slurm-managed x86\_64 Linux server with kernel \texttt{6.8.0-124-generic}, two AMD EPYC 9654 96-core processors, 1.5 TiB of system memory, and NVIDIA H100 GPUs with 80 GB of HBM3 memory, using driver 610.43.02. The sweep launcher requested one GPU per run. The software environment used Python 3.12.3, PyTorch 2.12.1 with CUDA 13.0, Transformers 5.12.1, \texttt{flash-linear-attention} 0.5.1, Triton 3.7.1, and NumPy 2.5.0.

\newpage

\subsection{Positional-Encoding Ablation}

Before running the main sweep, we compared learned, sinusoidal, rotary, polynomial, and no positional encoding. This ablation uses $d=16$, two attention heads, $n\in\{32,64\}$, seeds $\{0,1,2\}$, and the same training protocol as the main sweep. The ablation covers nine architecture--depth configurations (rotary is evaluated only for the four softmax-attention configurations):

\begin{enumerate}
    \item RHI full and causal softmax attention with one layer
    \item RHI full linear attention with two layers
    \item LHI full and causal softmax and linear attention with two layers
    \item a one-layer LHI GRU
    \item two-layer LHI Mamba2
\end{enumerate}

For polynomial positional encoding, each token position $t\in\{0,\ldots,L-1\}$ is represented by $(p,p^2)$, where $p=t/(L-1)$ and $L$ is the training sequence length. This two-dimensional representation is projected to the model dimension using a learned bias-free linear layer.

The main paper reports only the conclusion of this ablation. Table~\ref{tab:supp-pe} presents the results by sequence length. Polynomial encoding is the only option that reaches perfect held-out accuracy in every run at $n=32$ and in all but one run at $n=64$. Based on these results, we use polynomial positional encoding throughout the main sweep.

\begin{table*}[!h]
\centering
\begin{tabular}{lrrrr}
\toprule
Encoding & $n$ & Runs & Successes & Mean max. accuracy \\
\midrule
  None & 32 & 27 & 12 & 0.7628 \\
  None & 64 & 27 & 12 & 0.7578 \\
  Learned & 32 & 27 & 17 & 0.9751 \\
  Learned & 64 & 27 & 8 & 0.8708 \\
  Sinusoidal & 32 & 27 & 23 & 0.9969 \\
  Sinusoidal & 64 & 27 & 18 & 0.9432 \\
  Rotary & 32 & 12 & 12 & 1.0000 \\
  Rotary & 64 & 12 & 6 & 0.9497 \\
  Polynomial & 32 & 27 & 27 & 1.0000 \\
  Polynomial & 64 & 27 & 26 & 0.9849 \\
\bottomrule
\end{tabular}
\caption{Positional-encoding ablation. Rotary is included only for softmax attention.}
\label{tab:supp-pe}
\end{table*}

The only polynomial run that does not reach perfect held-out accuracy is \texttt{L64\_softmaxL2d16h2\_poly\_s0}, with maximum held-out accuracy $0.5910$ after 500 epochs. All other polynomial runs in this ablation reach held-out accuracy $1.0$.

\newpage

\subsection{Complete Main-Sweep Accuracy}

Table~\ref{tab:supp-max-accuracy} complements the success counts reported in the main paper by showing the maximum held-out accuracy reached during training. Each entry reports the mean $\pm$ standard deviation across five seeds. This makes it possible to distinguish configurations that came close to the perfect-accuracy criterion of $1.0$ from those that remained substantially below perfect accuracy.

As in the previous sections, one-layer full and causal attention models are listed separately because both variants were trained independently.

\begin{table*}[!h]
\centering
\begin{tabular}{llclcccc}
\toprule
Where & Model family & Depth & Theory
& $n=8$ & $n=16$ & $n=32$ & $n=64$ \\
\midrule
RHI & Full softmax & 1 & Constructive
& $1.00\pm0.00$ & $1.00\pm0.00$ & $1.00\pm0.00$ & $1.00\pm0.00$ \\

RHI & Causal softmax & 1 & Constructive
& $1.00\pm0.00$ & $1.00\pm0.00$ & $1.00\pm0.00$ & $1.00\pm0.00$ \\

RHI & Full linear & 1 & Stress test
& $1.00\pm0.00$ & $1.00\pm0.00$ & $0.78\pm0.01$ & $0.64\pm0.01$ \\

RHI & Full linear & 2 & Constructive
& $1.00\pm0.00$ & $1.00\pm0.00$ & $1.00\pm0.00$ & $1.00\pm0.00$ \\

RHI & Causal linear & 1 & Stress test
  & $1.00\pm0.00$ & $1.00\pm0.00$ & $0.78\pm0.01$ & $0.65\pm0.01$ \\

RHI & Causal linear & 2 & Stress test
  & $1.00\pm0.00$ & $1.00\pm0.00$ & $0.99\pm0.02$ & $0.77\pm0.01$ \\

RHI & Causal linear & 3 & Stress test
& $1.00\pm0.00$ & $1.00\pm0.00$ & $0.99\pm0.01$ & $0.82\pm0.02$ \\

RHI & Causal linear & 4 & Stress test
  & $1.00\pm0.00$ & $1.00\pm0.00$ & $1.00\pm0.00$ & $0.85\pm0.04$ \\

RHI & GRU & 1 & Stress test
  & $1.00\pm0.00$ & $0.95\pm0.02$ & $0.77\pm0.02$ & $0.66\pm0.01$ \\

RHI & GRU & 2 & Stress test
  & $1.00\pm0.00$ & $1.00\pm0.00$ & $0.86\pm0.01$ & $0.69\pm0.01$ \\

RHI & GRU & 3 & Stress test
  & $1.00\pm0.00$ & $1.00\pm0.00$ & $0.86\pm0.02$ & $0.72\pm0.01$ \\

RHI & GRU & 4 & Stress test
  & $1.00\pm0.00$ & $1.00\pm0.00$ & $0.94\pm0.02$ & $0.74\pm0.01$ \\

RHI & Mamba & 1 & Stress test
  & $1.00\pm0.00$ & $0.99\pm0.01$ & $0.77\pm0.07$ & $0.66\pm0.02$ \\

RHI & Mamba & 2 & Stress test
  & $1.00\pm0.00$ & $1.00\pm0.00$ & $0.91\pm0.04$ & $0.69\pm0.04$ \\

RHI & Mamba & 3 & Stress test
& $1.00\pm0.00$ & $1.00\pm0.00$ & $0.97\pm0.03$ & $0.79\pm0.02$ \\

RHI & Mamba & 4 & Stress test
  & $1.00\pm0.00$ & $1.00\pm0.00$ & $0.99\pm0.02$ & $0.80\pm0.04$ \\

\midrule

LHI & GRU & 1 & Constructive
& $1.00\pm0.00$ & $1.00\pm0.00$ & $1.00\pm0.00$ & $1.00\pm0.00$ \\

LHI & Mamba & 1 & Stress test
  & $1.00\pm0.00$ & $0.82\pm0.06$ & $0.65\pm0.04$ & $0.59\pm0.01$ \\

LHI & Mamba & 2 & Constructive
& $1.00\pm0.00$ & $1.00\pm0.00$ & $1.00\pm0.00$ & $0.91\pm0.20$ \\

LHI & Full softmax & 1 & Stress test
  & $0.76\pm0.15$ & $0.64\pm0.06$ & $0.58\pm0.01$ & $0.56\pm0.02$ \\

LHI & Full softmax & 2 & Constructive
  & $1.00\pm0.00$ & $1.00\pm0.00$ & $1.00\pm0.00$ & $0.92\pm0.18$ \\

LHI & Causal softmax & 1 & Stress test
  & $0.76\pm0.15$ & $0.64\pm0.06$ & $0.58\pm0.01$ & $0.56\pm0.02$ \\

LHI & Causal softmax & 2 & Constructive
& $1.00\pm0.00$ & $1.00\pm0.00$ & $1.00\pm0.00$ & $1.00\pm0.00$ \\

LHI & Full linear & 1 & Stress test
  & $1.00\pm0.00$ & $0.78\pm0.15$ & $0.64\pm0.06$ & $0.58\pm0.02$ \\

LHI & Full linear & 2 & Constructive
& $1.00\pm0.00$ & $1.00\pm0.00$ & $1.00\pm0.00$ & $1.00\pm0.00$ \\

LHI & Causal linear & 1 & Stress test
  & $1.00\pm0.00$ & $0.77\pm0.14$ & $0.64\pm0.06$ & $0.58\pm0.02$ \\

LHI & Causal linear & 2 & Constructive
& $1.00\pm0.00$ & $1.00\pm0.00$ & $1.00\pm0.00$ & $1.00\pm0.00$ \\
\bottomrule
\end{tabular}

\caption{
Mean and standard deviation, across five seeds, of the maximum held-out
accuracy reached during training for each architecture--depth
configuration. \emph{Constructive} and \emph{Stress test} denote the
theory-guided groups defined in the main paper.
}
\label{tab:supp-max-accuracy}
\end{table*}

\newpage

\subsection{Runtime and Model Scale at $n=64$}

Table~\ref{tab:supp-cost-n64} reports parameter counts, training length, and wall-clock runtime for the constructive families at $n=64$. Successes are reported across five seeds. The number of parameters is fixed for each configuration, while the number of epochs and runtime measurements are reported as medians across the five runs.

Training time is measured for the final training epoch, and evaluation time corresponds to one pass over the fixed held-out set of 2,000 examples using the same batch size as training. These measurements document the computational scale of the experiments under the hardware and software setup described above.

\begin{table*}[!h]
\centering
\begin{tabular}{llcrrrrr}
\toprule
Where & Model & L & Succ. & Params & Med. epochs
& Train s/epoch & Eval s/pass \\
\midrule
RHI & Full softmax & 1 & 5/5 & 3473 & 185 & 2.21 & 0.065 \\
RHI & Causal softmax & 1 & 5/5 & 3473 & 185 & 2.24 & 0.065 \\
RHI & Full linear & 2 & 5/5 & 6577 & 73 & 2.55 & 0.068 \\
LHI & GRU & 1 & 5/5 & 1777 & 325 & 2.04 & 0.062 \\
LHI & Mamba & 2 & 4/5 & 3261 & 113 & 9.01 & 0.184 \\
LHI & Full softmax & 2 & 4/5 & 6753 & 261 & 2.57 & 0.069 \\
LHI & Causal softmax & 2 & 5/5 & 6753 & 112 & 2.45 & 0.069 \\
LHI & Full linear & 2 & 5/5 & 6577 & 50 & 3.24 & 0.085 \\
LHI & Causal linear & 2 & 5/5 & 6577 & 33 & 2.60 & 0.069 \\
\bottomrule
\end{tabular}
\caption{
Parameter counts, training length, and runtime at $n=64$ for the
constructive families. Epoch and runtime columns report medians
across five seeds.
}
\label{tab:supp-cost-n64}
\end{table*}

\newpage

\subsection{Length-Shifted Evaluation}

This section reports length-shifted evaluations for the constructive families. For each run, the model is evaluated on inputs with $n/2$ and $2n$ bits using the same parameters, tokenization, and readout convention, without retraining. These evaluations do not affect whether a run is counted as successful in-distribution.

At the task level, a shorter instance can always be embedded into a longer one by appending additional bits that are never queried. Thus, there is no fundamental obstacle to using the same indexing rule across different lengths. However, our models are trained at a single length, and the polynomial positional encoding uses the training sequence length as its reference scale. The learned solution may therefore rely on the positions seen during training rather than implement a length-independent indexing rule. Strong length generalization was therefore not expected, but we included this evaluation as a simple probe of whether it emerged nevertheless.

As in the complete main-sweep table, one-layer full and causal attention variants are counted separately here because they were trained independently. This gives 45 possible constructive-family runs at each training length. The tables report the length-shifted accuracy from the final training epoch and include only runs whose maximum in-distribution held-out accuracy reached $1.0$. Table~\ref{tab:supp-ood-by-n} aggregates these results by training length, while Table~\ref{tab:supp-ood-n64-family} reports them separately for each model family at $n=64$.

Transfer across lengths is weak overall. Accuracy at $n/2$ is above chance for models trained at the smallest length but approaches $0.5$ as the training length increases. Accuracy at $2n$ remains near chance throughout. At training length $n=64$, both length-shifted evaluations are close to chance for every listed family.

\begin{table}[!h]
\centering
\begin{tabular}{rccc}
\toprule
Train $n$ & ID succ. & Acc. at $n/2$ & Acc. at $2n$ \\
\midrule
8 & 45 & $0.629\pm 0.091$ & $0.535\pm 0.026$ \\
16 & 45 & $0.554\pm 0.042$ & $0.520\pm 0.018$ \\
32 & 45 & $0.527\pm 0.023$ & $0.510\pm 0.011$ \\
64 & 43 & $0.514\pm 0.014$ & $0.500\pm 0.007$ \\
\bottomrule
\end{tabular}
\caption{
Length-shifted accuracy for the constructive families, aggregated
by training length. Results are reported as mean $\pm$ standard
deviation across runs that succeeded in-distribution.
}
\label{tab:supp-ood-by-n}
\end{table}

\begin{table*}[!h]
\centering
\begin{tabular}{llcccc}
\toprule
Where & Model & L & ID succ. & Acc. at $n/2$ & Acc. at $2n$ \\
\midrule
RHI & Full softmax & 1 & 5/5 & $0.525\pm 0.008$ & $0.501\pm 0.007$ \\
RHI & Causal softmax & 1 & 5/5 & $0.525\pm 0.008$ & $0.501\pm 0.007$ \\
RHI & Full linear & 2 & 5/5 & $0.502\pm 0.005$ & $0.500\pm 0.008$ \\
LHI & GRU & 1 & 5/5 & $0.510\pm 0.011$ & $0.497\pm 0.006$ \\
LHI & Mamba & 2 & 4/5 & $0.513\pm 0.024$ & $0.499\pm 0.010$ \\
LHI & Full softmax & 2 & 4/5 & $0.523\pm 0.005$ & $0.504\pm 0.003$ \\
LHI & Causal softmax & 2 & 5/5 & $0.517\pm 0.010$ & $0.497\pm 0.007$ \\
LHI & Full linear & 2 & 5/5 & $0.497\pm 0.012$ & $0.503\pm 0.011$ \\
LHI & Causal linear & 2 & 5/5 & $0.513\pm 0.007$ & $0.499\pm 0.009$ \\
\bottomrule
\end{tabular}
\caption{
Length-shifted accuracy for the constructive families trained at
$n=64$. Results are reported as mean $\pm$ standard deviation across
runs that succeeded in-distribution.
}
\label{tab:supp-ood-n64-family}
\end{table*}

\end{document}